\documentclass[11pt,letterpaper]{article}
\usepackage[T1]{fontenc}
\usepackage[utf8]{inputenc}
\usepackage{lmodern}
\usepackage[margin=1in]{geometry}
\usepackage{amsmath,amssymb,amsthm,bm,graphicx,booktabs,microtype}
\usepackage{xcolor}
\usepackage{placeins}
\usepackage[hidelinks]{hyperref}
\hypersetup{
  pdftitle={Low-Bit Recurrent States in Hybrid Language Models},
  pdfauthor={HONGREN CHEN; JIAYANG HE},
  pdfsubject={Author preprint of the manuscript submitted to ICASSP 2027},
  pdfkeywords={linear attention, delta rule, state-space models, quantization, bit allocation},
  pdfdisplaydoctitle=true
}
\theoremstyle{plain}
\newtheorem{proposition}{Proposition}

\makeatletter
\renewcommand{\section}{\@startsection{section}{1}{\z@}%
  {-2.3ex plus -0.5ex minus -0.2ex}{1.0ex plus 0.2ex}%
  {\normalfont\large\bfseries}}
\long\def\@makecaption#1#2{%
  \vskip\abovecaptionskip
  {\small\noindent\textbf{#1.} #2\par}%
  \vskip\belowcaptionskip}
\makeatother
\newcommand{\bk}{\bm{k}}
\newcommand{\ba}{\bm{a}}
\newcommand{\bv}{\bm{v}}
\newcommand{\bw}{\bm{w}}
\newcommand{\bq}{\bm{q}}
\newcommand{\bs}{\bm{s}}
\newcommand{\Diag}{\operatorname{diag}}
\newcommand{\expmone}{\operatorname{expm1}}
\newcommand{\GM}{\operatorname{GM}}
\newcommand{\AM}{\operatorname{AM}}
\newcommand{\E}{\mathbb{E}}

\title{\fontsize{14}{17}\selectfont\bfseries Low-Bit Recurrent States in Hybrid Language Models}
\author{%
  HONGREN CHEN\\[4pt]
  \small\href{mailto:hungjenchen@telswarm.com}{hungjenchen@telswarm.com}
  \and
  JIAYANG HE\\[4pt]
  \small\href{mailto:1755147627@qq.com}{1755147627@qq.com}}
\date{\small Author preprint -- September 25, 2026}
\begin{document}
\maketitle
\begin{center}
\small Submitted to ICASSP 2027. Not yet peer reviewed.\par
\end{center}
\begin{abstract}
Hybrid language models maintain fixed-size recurrent states, but existing quantizers typically
use eight bits or more. Quantization errors persist according to channel decay rates.
We derive distortion weights from the observability Gramian and combine them with normalized
state ranges for mixed-precision bit allocation, without calibration data, rotation, or training.
We also quantize decay rates logarithmically. With per-token state quantization, a four-bit mean
payload reduces excess negative log-likelihood by factors of $3.3$--$27.9$ relative to the best
of seven baselines across three hybrid models; metadata costs vary. At six bits, negative
log-likelihood differs from the FP32-state baseline by less than $0.005$ nats. Ablations separate
gains from variable bit widths, decay weighting, and range normalization. With less frequent
write-backs, gains diminish and depend on the model and budget.
\end{abstract}

\par\noindent\textbf{Keywords:}
linear attention, delta rule, state-space models, quantization, bit allocation
\par\medskip

\FloatBarrier
\section{Introduction}
\label{sec:intro}

Gated linear recurrences retain information in fixed-size states~\cite{mamba,mamba2,gdn}.
Kimi Delta Attention (KDA) multiplies row $i$ by $e^{-s_i}$ per token, with time constant
$1/s_i$ for constant $s_i$~\cite{kimilinear}. Gated DeltaNet and Mamba-2 share decay within
each head~\cite{gdn,mamba2}; Gated DeltaNet-2 separates erasing and writing~\cite{gdn2}.
Recurrent states commonly remain FP32~\cite{damp}, even when projections use
NVFP4~\cite{nvfp4gdn}; Nemotron~3 uses FP16 states with stochastic rounding~\cite{nemotron3}.

Range-based quantization controls error at a single write-back, as in KV-cache
compression~\cite{kvquant}. Recurrent-state errors also affect later readouts: rapidly decaying
rows dissipate errors, whereas slowly decaying rows accumulate them across write-backs.

Under a linear noise model, an observability Gramian determines accumulated readout distortion.
For diagonal decay, its diagonal entries sum query-weighted squared decay factors
(Fig.~\ref{fig:mech}a). Combining these with normalized state ranges yields a classical
high-rate bit-allocation rule~\cite{huang63,gersho} without calibration data or training.
We isolate decay-weighting gains under frequent write-back on three hybrid models, using
excess negative log-likelihood (NLL).

\noindent\textbf{Related work.} DAMP~\cite{damp} multiplies each key row's low-precision error
energy by a capped version of $w^{(1)}=1/(1-a^2)$, with effective decay $a$.
Retaining $16$ of $128$ rows per head in FP16 and the rest in Hadamard INT8 requires
$9.875$ bits per element; INT4 and NVFP4 severely degrade accuracy. DASC~\cite{dasc} omits
short-horizon rows from prefix checkpoints; deferred write-back reduces memory
traffic~\cite{deltalog,kvbuffer,nemotron3ultra}. We adapt recurrent-state
quantizers~\cite{quamba2,ssdi8,qmamba,nemotron3}, KV-cache
quantizers~\cite{kivi,kvquant,quarot,turboquant,kvarn,oscar}, and four-bit
formats~\cite{ocpmx,nvfp4gdn,fourosix} as baselines.

Persistence weighting~\cite{damp} and rate--distortion allocation~\cite{aatc,kvtc} are
established. We connect persistence to readout distortion, normalize ranges within output
normalization groups, and evaluate lower-precision variable-width allocation and logarithmic
decay-rate quantization. Without erasure, our weight is DAMP's uncapped persistence times
$e^{-2s_i}$; their held-out NLL differs by at most $0.003$ nats with chunked write-back.

\begin{figure}[tbp]
\centering
\includegraphics[width=\textwidth]{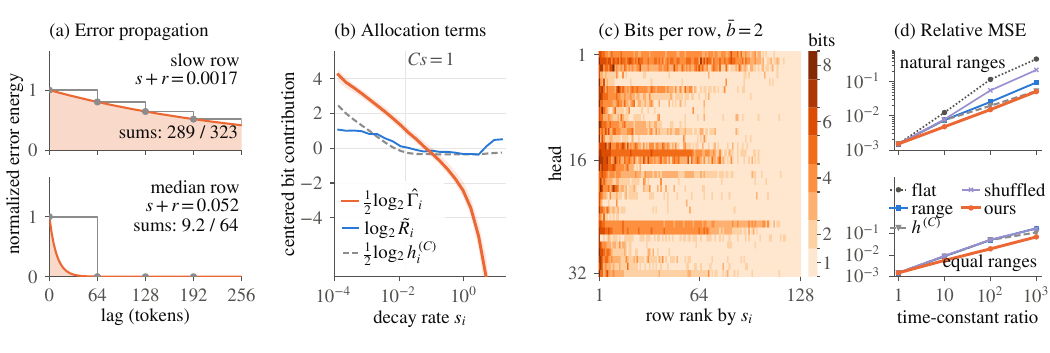}
\caption{\textbf{Error propagation and bit allocation.} (a)~Squared decay of an injected state
error in two Kimi-Linear rows; gray steps sample it at write-backs. Paired sums refer to the decay curve and steps. (b)~Centered contributions to
the bit-allocation target. (c)~Allocated bit widths in one layer. (d)~Relative readout MSE in a
synthetic recurrence with natural or equal row ranges. Settings: $C=64$, two-bit mean payload.}
\label{fig:mech}
\end{figure}

\FloatBarrier
\section{Propagation of state quantization error}
\label{sec:theory}

With $\|\bk_t\|_2=1$, $\mathbf D_t=\Diag(e^{-\bs_t})$, and $\bs_t>\bm 0$, each head's
state $\mathbf S_t\in\mathbb R^{d_k\times d_v}$ follows
\begin{equation}
\begin{aligned}
\mathbf S_t&=(\mathbf I-\bk_t\ba_t^{\!\top})\mathbf D_t\mathbf S_{t-1}
+\bk_t(\bw_t\odot\bv_t)^{\!\top},\\
\bm o_t&=\mathbf S_t^{\!\top}\bq_t,
\end{aligned}
\label{eq:rule}
\end{equation}
where $\ba_t$ and $\bw_t$ control erasure and writing. KDA and Gated DeltaNet use
$\ba_t=\beta_t\bk_t$, $\bw_t=\beta_t\bm 1$~\cite{kimilinear,gdn}; Mamba-2 has no
erasure~\cite{mamba2}.

\noindent\textbf{Accumulated readout error.} Write-back occurs every $C$ tokens: once per
prefill chunk, per token in standard decoding, or less frequently with deferred write-back
($C=8$ in~\cite{nemotron3ultra}). Outputs precede quantization, so an error introduced at
$\tau$ is first read at $\tau+1$ and propagates through
$\mathbf F_t=(\mathbf I-\bk_t\ba_t^{\!\top})\mathbf D_t$ between write-backs.

\begin{proposition}[Time-averaged readout distortion]
\label{prop:gram}
Assume stationary layer inputs independent of quantization noise and finite accumulated error
energy. Conditional on these inputs, noise is zero-mean, uncorrelated across rows and write-backs,
with per-element row variance $\sigma_i^2$. Steady-state readout MSE per value column,
averaged over positions within a write-back interval, is $\sum_i\Gamma_i\sigma_i^2$, where
\begin{equation}
\Gamma_i=\frac1C\sum_{\ell\ge1}\E\big[(\Phi_\ell^{\!\top}\bq_{\tau+\ell})_i^2\big],
\label{eq:gram}
\end{equation}
$\Phi_\ell=\mathbf F_{\tau+\ell}\cdots\mathbf F_{\tau+1}$, and $C\Gamma_i$ is a diagonal
entry of the expected observability Gramian. Constant diagonal decay without erasure gives
$\Gamma_i=\rho_i/(C\expmone(2s_i))$, with $\rho_i=\E q_i^2$ and $\expmone(x)=e^x-1$.
For constant per-head decay and $\ba=\beta\bk$, $\beta\in[0,2]$,
$\Gamma_i\le\E\|\bq\|_2^2/(C\expmone(2s))$.
\end{proposition}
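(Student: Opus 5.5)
The plan is to write the error dynamics explicitly and then average over write-back times. Let $\mathbf E_\tau$ be the quantization error injected at write-back time $\tau$. Outputs are read before quantization, and the recurrence \eqref{eq:rule} is affine in $\mathbf S$. The write term $\bk_t(\bw_t\odot\bv_t)^{\!\top}$ therefore cancels between the perturbed and unperturbed trajectories. An error injected at $\tau$ thus contributes $\Phi_\ell\mathbf E_\tau$ to the state at $\tau+\ell$, and contributes $\mathbf E_\tau^{\!\top}\Phi_\ell^{\!\top}\bq_{\tau+\ell}$ to the readout. By linearity, the total readout error at time $t$ is the sum of these contributions over all past write-backs $\tau=t-\ell$ with $\ell\ge1$. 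The finite-energy assumption makes this sum converge in $L^2$.

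\textbf{Second moment.} Fix a value column $j$ and condition on the layer inputs. Because the noise is zero-mean and uncorrelated across write-backs and across rows, all cross terms vanish. The conditional second moment of the error at time $t$ in column $j$ is then
$\sum_{\tau}\sum_i\sigma_i^2(\Phi_{t-\tau}^{\!\top}\bq_t)_i^2$.
Taking expectations over the stationary inputs, and using independence of inputs and noise, gives $\sum_i\sigma_i^2\sum_{\tau}\E[(\Phi_{t-\tau}^{\!\top}\bq_t)_i^2]$.

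\textbf{Averaging over positions.} Average over the $C$ positions $t$ in one write-back interval. Each pair $(\tau,\ell)$ with $\ell\ge1$ then appears exactly once, after relabelling $t=\tau+\ell$. By stationarity, the expectation depends only on $\ell$. The sum therefore becomes $\frac1C\sum_{\ell\ge1}\E[(\Phi_\ell^{\!\top}\bq_{\tau+\ell})_i^2]=\Gamma_i$, which gives \eqref{eq:gram}. The term $\sum_\ell\Phi_\ell^{\!\top}\bq\bq^{\!\top}\Phi_\ell$ is by definition the observability Gramian of the pair $(\mathbf F,\bq^{\!\top})$, so $C\Gamma_i$ is its $i$-th diagonal entry. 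The main subtlety in this step is the bookkeeping: the index lattice of (write-back, offset) pairs must cover each lag exactly once per interval. Stationarity has to be taken with respect to shifts by $C$, and conditioning has to commute with the products $\Phi_\ell$; I would state both explicitly.

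\textbf{Constant diagonal decay, no erasure.} Here $\Phi_\ell=\Diag(e^{-\ell s_i})$, so $(\Phi_\ell^{\!\top}\bq)_i^2=e^{-2\ell s_i}q_i^2$. Summing the geometric series gives
$\sum_{\ell\ge1}e^{-2\ell s_i}=1/(e^{2s_i}-1)=1/\expmone(2s_i)$,
and hence $\Gamma_i=\rho_i/(C\expmone(2s_i))$.

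\textbf{Per-head decay with erasure.} Write $\mathbf F_t=e^{-s}(\mathbf I-\beta_t\bk_t\bk_t^{\!\top})$ with $\|\bk_t\|=1$. The matrix $\mathbf I-\beta\bk\bk^{\!\top}$ is symmetric with eigenvalues $1$ and $1-\beta\in[-1,1]$, so its operator norm is at most $1$. It follows that $\|\Phi_\ell\|_2\le e^{-\ell s}$. Then
$(\Phi_\ell^{\!\top}\bq)_i^2\le\|\Phi_\ell^{\!\top}\bq\|_2^2\le e^{-2\ell s}\|\bq_{\tau+\ell}\|_2^2$.
Taking expectations (stationarity makes $\E\|\bq\|^2$ independent of position) and summing the geometric series gives the stated bound. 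This case is easy once the contraction property is noted, so I expect the averaging and indexing step above to be the real obstacle.
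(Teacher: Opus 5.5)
Your proposal is correct and follows the paper's own proof sketch: cross terms vanish under the zero-mean, uncorrelated noise assumption; each write-back contributes once at every lag $\ell\ge1$ per $C$ tokens; and the two special cases follow from $\sum_{\ell\ge1}e^{-2s\ell}=1/\expmone(2s)$ and $\|\mathbf I-\beta\bk\bk^{\!\top}\|_2\le1$. You also make explicit two points the sketch leaves implicit: stationarity is needed under shifts by $C$, and the inputs must be exogenous for the write term to cancel.
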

\noindent\emph{Proof sketch.} Cross terms vanish. Each write-back contributes at all lags
$\ell\ge1$, once per $C$ tokens. Apply $\sum_{\ell\ge1}e^{-2s\ell}=1/\expmone(2s)$ and
$\|\mathbf I-\beta\bk\bk^{\!\top}\|_2\le1$. $\square$
The factor $1/C$ scales distortion but leaves allocation unchanged. Exogenous inputs are a local approximation: teacher forcing does not
eliminate cross-layer noise propagation.

\noindent\textbf{Correlated rounding errors.} Deterministic rounding at $C=1$ can suppress
updates below half a quantization step. Simulations with Kimi decays, rank-one writes, and
$2$K tokens show lag-one error correlations up to $0.23$ for $s_i\le10^{-3}$ at three to five
bits. Readout error is $0.17$--$2.4$ times that of independent noise with variance
$\Delta^2/12$ (step size $\Delta$), with discrepancies within $20\%$ from six bits.
Proposition~\ref{prop:gram} is therefore a decoding heuristic: in these simulations, the model
overestimates the gain over uniform four-bit allocation by $1.7$--$2.6$ times.

\noindent\textbf{Approximate erasure rate.} For channel-wise decay, we approximate
$\Gamma_i\approx\rho_i/[C\expmone(2(s_i+r_i))]$, with
$r_i=\frac12\E[\beta(2-\beta)k_i^2]$. This is not a bound: errors orthogonal to a repeated
key are never erased. With a fixed key, $\ba=\bk/2$, $s=0.01$, and $C=1$, the exact and
approximate $\Gamma_i$ are $24.9$ and $2.06$. For simulated $s$ log-uniform in $[10^{-3},1]$,
allocation distortion differs from the exact-Gramian allocation by at most $3\%$ with diverse
keys and $17$--$23\%$ with persistent keys. Omitting $r_i$ gives comparable held-out NLL;
we retain it.

\noindent\textbf{Range normalization.} To first order, RMSNorm projects out radial perturbations
and divides by the group's output RMS. Ignoring the projection and learned gains motivates
normalizing ranges within each output normalization group: one head for KDA and Gated DeltaNet,
or $12$ heads for Nemotron-3-Nano's Mamba-2 layers~\cite{nemotronelastic}. Assuming uniform
$\rho_i$ and output mean square proportional to the group's mean squared row range gives the
proxy $\Gamma_i\tilde R_i^2$. We evaluate these assumptions through ablations
(Sec.~\ref{sec:ablate}); they are not exact RMSNorm properties.

\FloatBarrier
\section{Decay-aware quantization}
\label{sec:method}

\noindent\textbf{Decay-gate quantization.} A rate error $\Delta s$ changes the signal retained
at the time constant $1/s$ by $e^{-\Delta s/s}$. The decay weight's log-sensitivity also tends
to $-1$ as $s\to0$, motivating levels uniform in $\log s$ per channel. At two bits, gate-only
quantization raises WikiText perplexity by at most $0.27$ on Qwen3.5-0.8B/2B/9B, Qwen3.8-27B,
and Kimi. It outperforms levels uniform in $g=-s$ on all models; the latter give perplexity
$86.9$ on Kimi even at four bits, versus $9.33$ in full precision. Subsequent experiments keep
gates in full precision. With both gates and states at two bits, held-out NLL increases are
approximately additive ($C=64$: gates $+0.037$, states $+0.006$, both $+0.047$).

\noindent\textbf{State bit allocation.} We minimize the high-rate proxy $\sum_i c_i2^{-2b_i}$ using
\begin{equation}
c_i=\hat\Gamma_i\tilde R_i^2,\qquad
\hat\Gamma_i=\frac1{\expmone(2(\hat s_i+r_i))}.
\label{eq:weight}
\end{equation}
Here $\tilde R_i=R_i/\operatorname{RMS}_{j\in u(i)}R_j$ normalizes ranges within output group
$u(i)$; $R_i$ is the row absolute maximum, averaged over rows for a head unit. At layer-wise
mean budget $\bar b$, the unconstrained real-valued solution is
\begin{equation}
b_i=\bar b+\tfrac12\log_2\big(c_i/\GM(c)\big).
\label{eq:alloc}
\end{equation}
Under this proxy, the distortion reduction over uniform allocation is $\AM(c)/\GM(c)$,
where $\AM$ and $\GM$ are the arithmetic and geometric means~\cite{huang63,gersho}. This
optimality does not extend to low-bit integer quantizers. We round, clip to $[b_{\min},b_{\max}]$,
and adjust widths to meet the layer budget, with $b_{\max}=8$ ($16$ at $C=1$). At each write-back,
$\hat s_i=-\frac12\log\operatorname{EMA}_{64}(e^{2g_{t,i}})$ and
$r_i=\frac12\operatorname{EMA}_{64}[\beta_t(2-\beta_t)k_{t,i}^2]$, using a chunk-aware exponential
moving average with target span $64$ tokens. In simulations with log-normal gates and widths
in $[1,8]$, this increases readout error by at most $2.5\%$ over exact-Gramian allocation across
tested $C$. A one-token estimate gives comparable decoding NLL.

\noindent\textbf{Allocation units.} For channel-wise decay (KDA) or normalization spanning heads
(Mamba-2), each row has a scale and width, with $b_{\min}=1$. When decay and normalization
are per head (Gated DeltaNet), each head has one width and scales per block of $32$ key
channels~\cite{kivi}. Then $\tilde R=1$ and only decay determines widths. We use $b_{\min}=2$
to avoid one-bit distortion of an entire head; a two-bit mean reduces to uniform block quantization.

\noindent\textbf{Levels and storage.} For $b\ge2$, we use levels
$\{-n,\ldots,n\}\,\Delta$, with $n=2^{b-1}-1$ and $\Delta=fR/n$. We fit $f$ over $16$ values in
$[1/4,1]$ by squared error per scale group, storing the result in an FP16 scale. One bit stores
the sign times its mean absolute value. We count ternary levels as two bits, without
ternary packing. An FP16 scale and three-bit width add $19/d_v$ bits per row element
($20/d_v$ with four-bit widths at $C=1$). Block scales add $0.5$ bits per element; per-head
widths are also counted. Prefix sums of widths give row offsets.

\FloatBarrier
\section{Bit-allocation ablations}
\label{sec:ablate}

\begin{table}[tbp]
\centering\small
\caption{\textbf{Held-out bit-allocation ablations} on Kimi (excess NLL, nats). Only $c_i$ varies. Payload: $4$ bits at $C=1$, $2$ bits otherwise. Settings: $2$K-token windows ($8$ at $C=1$, else $32$). Bold: minimum and entries whose paired $95\%$ interval against it includes zero; --: not evaluated.}
\label{tab:ablate}
\vspace{1pt}
\setlength{\tabcolsep}{6pt}
\begin{tabular}{@{}lccc@{}}
\toprule
\emph{weight $c_i$} & $C=1$ & $C=16$ & $C=64$\\
\midrule
$R_i^2$, range alone & +0.047 & +0.030 & +0.009\\
$\hat\Gamma_iR_i^2$, raw range & -- & +0.034 & +0.012\\
\midrule
$\tilde R_i^2$, relative range & +0.072 & +0.043 & +0.011\\
$w^{(1)}_i\tilde R_i^2$, persistence & -- & \textbf{+0.023} & +0.009\\
$h^{(C)}_i\tilde R_i^2$, write-back factor & -- & \textbf{+0.019} & \textbf{+0.002}\\
$\hat\Gamma_i\tilde R_i^2$, erase $r_i=0$ & -- & \textbf{+0.021} & +0.006\\
$\hat\Gamma_{\pi(i)}\tilde R_i^2$, shuffled decay & -- & +0.082 & +0.045\\
$\hat\Gamma_i\tilde R_i^2$ (ours) & \textbf{+0.034} & \textbf{+0.022} & +0.006\\
\midrule
uniform bit widths & -- & +0.167 & +0.081\\
\bottomrule
\end{tabular}
\end{table}

\begin{figure}[tbp]
\centering
\includegraphics[width=\textwidth]{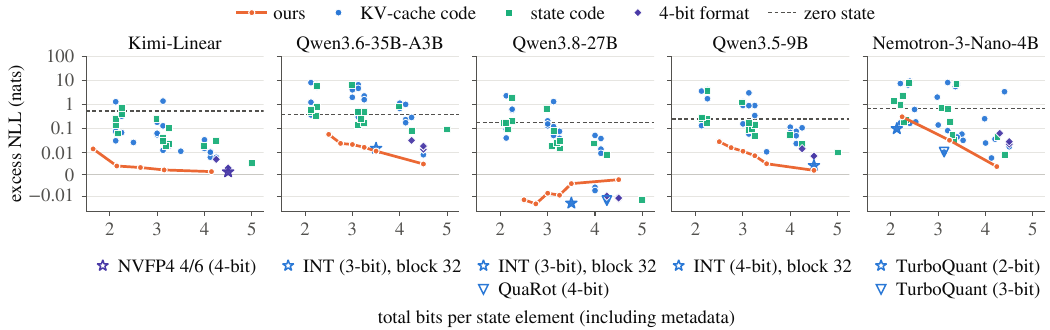}
\caption{\textbf{Chunked state quantization} (WikiText, $C=64$). Excess NLL is relative to FP32
on a symmetric logarithmic scale. Stars and triangles identify configurations in the legends
below each panel (nominal payloads).}
\label{fig:front}
\end{figure}

We use expert-pruned~\cite{reap} Kimi-Linear-REAP-35B-A3B-Instruct ($20$ KDA
layers~\cite{kimilinear}), fake-quantized at each write-back in the released kernel. Unless
specified, the mean payload is two bits. Excess NLL is measured against FP32 states on
WikiText~\cite{wikitext} in $2$K-token windows. The rule was selected on the first $32$K tokens
at $C=64$; Table~\ref{tab:ablate} uses tokens from $128$K onward, and a second held-out set
spans $32$K--$64$K. Intervals are paired $95\%$ bootstraps of our NLL minus the alternative's;
``tie'' means the interval includes zero.

\noindent\textbf{Effect of decay weighting.} Relative to allocation by raw range alone, our method
reduces excess NLL by $0.013$ at $C=1$ ($[-0.021,-0.006]$), $0.008$ at $C=16$
($[-0.013,-0.002]$), and $0.003$ at $C=64$ ($[-0.007,+0.000]$, a tie). Relative to normalized
range alone, the reductions are $0.038$, $0.021$, and $0.005$; the last comparison ties on the
second held-out set. This agrees qualitatively with $1/C$ in~\eqref{eq:gram}. The observed gain ratio between $C=16$ and $64$ is $4.0$ versus normalized range but $1.9$
versus uniform widths; its uncertainty is large (ratios $2$--$35$), and $C=1$ uses four bits. Shuffling decay weights across rows worsens NLL in all $32$ windows at both $C=16$
and $64$, confirming the importance of matching weights to rows.

\noindent\textbf{Effect of range normalization.} Decay-only allocation is worse than ours on Kimi
($-0.059$, $[-0.072,-0.046]$ on the second set at $C=64$). Normalization preserves the ranking
of rows within a head but changes allocation across heads. Slowly decaying heads tend to have
larger ranges: the median layer-wise Spearman correlation between a head's mean $s_i$ and RMS
range is $-0.65$ (range $-0.74$ to $-0.18$). Normalized range alone removes this indirect decay
signal and is $0.013$ worse than raw range at $C=16$. Combining decay with raw range can
overweight these heads: it is worse than ours at both $C=16$ and $64$ ($-0.012$,
$[-0.016,-0.008]$ at $C=16$). On Nemotron, where normalization spans $12$ heads, two-bit excess
NLL is $1.92$ with decay alone, $8.33$ with raw range, $3.20$ with normalized range, and $0.306$
with the combined weight.

\noindent\textbf{Per-head allocation.} On Qwen3.6-35B-A3B~\cite{qwen36}, a one-bit floor and
two-bit mean give excess NLL $1.42$ with raw-range allocation, exceeding the zero-state baseline,
and $0.219$ with decay allocation, which is lower in all $16$ windows. Both exceed the uniform
block quantizer's $0.057$, motivating the two-bit floor. During decoding, decay weighting
outperforms range-only allocation (Table~\ref{tab:decode}). On Qwen3.5-9B at $C=64$, decay
allocation has lower NLL at all four budgets between $2.25$ and $3$ bits; the paired interval
excludes zero at three bits.

\noindent\textbf{Choice of persistence factor.} With normalized ranges, both DAMP's uncapped
$w_i^{(1)}$ and the write-back-time factor $h_i^{(C)}=1/(1-e^{-2Cs_i})$ give NLL within $0.004$
of ours. Both tie ours at $C=16$ and improve on normalized range alone by $0.020$--$0.024$.
At $C=64$, $w_i^{(1)}$ is worse ($-0.003$, $[-0.005,-0.001]$), whereas $h_i^{(C)}$ is better
($+0.004$, $[+0.0002,+0.008]$). Omitting the erasure rate ties at both intervals.

\FloatBarrier
\section{Experimental comparisons}
\label{sec:exp}

\begin{table}[tbp]
\centering\small
\setlength{\tabcolsep}{6pt}
\renewcommand{\arraystretch}{1.12}
\caption{\textbf{Per-token state quantization} ($C=1$; excess NLL, nats). Ours outperforms all seven baselines in every window at four bits (unadjusted sign test $p=0.004$ per comparison). Settings: $8$ WikiText windows of $2$K tokens, $b_{\max}=16$. INT uses fitted scales; range-only uses the same units as ours. Total bits: Kimi ($^\dagger$$4.5$ on Qwen). Bold: minimum and statistical ties.}
\label{tab:decode}
\vspace{1pt}
\begin{tabular}{@{}lcccc@{}}
\toprule
\emph{(a) $\bar b=4$} & bits & Kimi & Qwen3.8 & Qwen3.6\\
\midrule
INT, row & $4.125$ & +0.781 & +1.47 & +4.14\\
INT, block $32$ & $4.5$ & +0.252 & +0.293 & +2.41\\
TurboQuant~\cite{turboquant} & $4.125$ & +0.131 & +0.073 & +2.83\\
Hadamard INT, block $32$~\cite{damp} & $5.0$ & +0.135 & +0.142 & +4.44\\
NVFP4~\cite{nvfp4gdn} & $4.5$ & +0.236 & +0.242 & +2.62\\
DAMP-style (INT3/8)~\cite{damp} & $5.016$ & +0.222 & +1.79 & +8.68\\
Stochastic rounding~\cite{nemotron3} & $4.25$ & +0.663 & +1.73 & +5.41\\
\midrule
range only & $4.156^\dagger$ & +0.047 & \textbf{+0.024} & +1.15\\
\textbf{ours} & $4.156^\dagger$ & \textbf{+0.040} & \textbf{+0.015} & \textbf{+0.086}\\
\bottomrule
\end{tabular}

\vspace{2pt}
\setlength{\tabcolsep}{6pt}
\begin{tabular}{@{}lcccccc@{}}
\toprule
 & \multicolumn{2}{c}{Kimi} & \multicolumn{2}{c}{Qwen3.8} & \multicolumn{2}{c}{Qwen3.6}\\
\cmidrule(lr){2-3}\cmidrule(lr){4-5}\cmidrule(l){6-7}
\emph{(b) $\bar b$} & $5$ & $6$ & $5$ & $6$ & $5$ & $6$\\
\midrule
INT, row & +0.174 & +0.027 & +0.571 & +0.338 & +2.03 & +1.22\\
INT, block $32$ & +0.075 & +0.034 & +0.010 & \textbf{$-$0.010} & +0.136 & +0.028\\
\textbf{ours} & \textbf{+0.014} & \textbf{+0.003} & \textbf{$-$0.001} & \textbf{$-$0.004} & \textbf{+0.021} & \textbf{+0.001}\\
\bottomrule
\end{tabular}
\end{table}

\begin{table}[tbp]
\centering\small
\setlength{\tabcolsep}{6pt}
\renewcommand{\arraystretch}{1.12}
\caption{\textbf{Chunked state quantization} ($C=64$; excess NLL, nats). Total bits are shown at $\bar b=2$ ($^\dagger$Qwen3.6: DASC $2.234$, ours $2.5$). Bold: minimum and statistical ties. INT scales are fitted; Quamba2 uses static scales. Requirements: calibration (C), FP16 values (F), rotation (R). The zero-state baseline resets the state every $64$ tokens.}
\label{tab:state}
\vspace{1pt}
\begin{tabular}{@{}lcccccc@{}}
\toprule
 & & & \multicolumn{2}{c}{Kimi-Linear} & \multicolumn{2}{c}{Qwen3.6}\\
\cmidrule(lr){4-5}\cmidrule(l){6-7}
code & req. & bits & $\bar b{=}2$ & $3$ & $\bar b{=}2$ & $3$\\
\midrule
zero state & -- & $0$ & \multicolumn{2}{c}{+0.525} & \multicolumn{2}{c}{+0.373}\\
\multicolumn{7}{@{}l}{\emph{KV-cache codes}}\\
INT, row & -- & $2.125$ & +0.076 & +0.034 & +0.365 & +0.299\\
INT, block $32$ & -- & $2.5$ & +0.026 & +0.011 & \textbf{+0.057} & \textbf{+0.015}\\
KIVI, block $32$~\cite{kivi} & -- & $3.0$ & +0.182 & +0.034 & +4.85 & +1.12\\
KVQuant, $1\%$~\cite{kvquant} & C\,F & $3.0$ & +0.055 & +0.017 & +1.96 & +0.761\\
TurboQuant~\cite{turboquant} & R & $2.125$ & +0.031 & +0.012 & +1.22 & +0.516\\
QuaRot~\cite{quarot} & R & $2.25$ & +0.068 & +0.019 & +6.27 & +2.23\\
OScaR~\cite{oscar} & R & $3.125$ & +1.37 & +0.010 & +4.60 & +0.997\\
\multicolumn{7}{@{}l}{\emph{state codes}}\\
Quamba2~\cite{quamba2} & C & $2.125$ & +0.252 & +0.041 & +0.466 & +0.140\\
DAMP-style~\cite{damp} & C\,R & $2.984$ & +0.249 & +0.019 & +6.32 & +0.825\\
DASC-style~\cite{dasc} & -- & $2.167^\dagger$ & +0.062 & +0.019 & +0.332 & +0.250\\
\midrule
\textbf{ours} & -- & $2.148^\dagger$ & \textbf{+0.003} & \textbf{+0.002} & \textbf{+0.057} & \textbf{+0.012}\\
\bottomrule
\end{tabular}
\end{table}

\noindent\textbf{Protocol.} Each write-back is quantized and dequantized before reuse; no
unquantized residual state is retained. Token-wise evaluation ($C=1$) uses Kimi,
Qwen3.8-27B~\cite{qwen38}, and Qwen3.6-35B-A3B over eight $2$K-token WikiText windows,
overlapping the $C=64$ rule-selection data. Reference runs use $C=64$ and $32$K tokens,
adding Qwen3.5-9B~\cite{qwen35} and Nemotron-3-Nano-4B ($1$K windows). Bits include scales,
zero points, outlier indices, and width maps~\cite{kvquant,damp}. TurboQuant is the best tested
variant. DAMP-style uses DAMP's score to assign INT8 to top-ranked rows within budget;
the rest use Hadamard INT with $\lfloor\bar b\rfloor-1$ bits. Eight decoding windows limit
bootstrap reliability. Quantizers are simulated, without kernel timings.

\noindent\textbf{Per-token state quantization} (Table~\ref{tab:decode}). At a four-bit mean,
ours outperforms all seven baselines in every window, reducing excess NLL by factors of
$3.3$, $4.8$, and $27.9$ relative to the best baseline on Kimi, Qwen3.8, and Qwen3.6, respectively. On Qwen3.8, TurboQuant uses $0.375$ fewer bits; at equal storage,
NVFP4 has $16$ times our excess NLL. Range-only allocation with identical units and limits
is $0.007$ worse on Kimi ($0.013$ held out) and ties on Qwen3.8. On Qwen3.6, all four-bit
baselines and range-only allocation ($+1.15$) exceed the zero-state NLL increase
($+0.373$, resets every $64$ tokens), versus our $+0.086$. DAMP-style's low tier is INT3,
rather than DAMP's reported INT4 or INT8. At five and six bits, ours outperforms both
uniform-width quantizers (one tie); six-bit excess NLL is at most $0.0034$.
DAMP's published setting uses $9.875$ bits. At a four-bit mean, Kimi's $C=64$ statistics
assign $4\%$ of rows more than eight bits, up to $14$.

\noindent\textbf{Chunked state quantization} (Fig.~\ref{fig:front}, Table~\ref{tab:state}).
For channel-wise decay, ours achieves the lowest measured NLL at $1.6$--$4.1$ total bits
per element and ties NVFP4 Four Over Six at a four-bit payload using $0.35$ fewer total bits. For per-head decay at three and four bits, it leads on Qwen3.5-9B
and ties the block quantizer on Qwen3.6. At four bits, it outperforms all tested four-bit formats
on Qwen3.6, Qwen3.5-9B, and Nemotron.

\noindent\textbf{Long-context retrieval.} With a two-bit mean and $b_{\max}=8$,
write-backs occur every $64$ prompt tokens and every answer token. On RULER~\cite{ruler} at
$16$K context (Kimi; $20$ prompts per task), all quantizers and FP32 solve every multi-key query. For multi-value queries (four values per needle), recovery is $0.90$ for ours, $0.80$
for row-wise INT and TurboQuant, $0.76$ for the fitted block, and $1.00$ for FP32. Paired gains
are $0.10$ over INT and TurboQuant ($[0.05,0.15]$; no worse on any prompt), $0.14$ over the
$2.5$-bit block ($[0.08,0.20]$), and $-0.10$ against FP32 ($[-0.15,-0.05]$).

\noindent\textbf{Limitations.} At $C=64$, ours is best in four of $15$ model--budget pairs,
tied in seven (three identical to the block quantizer), and worse in four. On Qwen3.8, some
quantizers have lower NLL than FP32: the three-bit block outperforms ours ($+0.016$,
$[+0.003,+0.030]$), and at four bits five quantizers are better and one ties. Decay also loses to
range-only allocation at two of four budgets. On Nemotron, nine quantizers are better at two bits
(TurboQuant $+0.098$, ours $+0.306$); Hadamard TurboQuant is better at three ($+0.011$, ours
$+0.033$). On held-out Kimi data at $C=64$ and three bits, normalized range outperforms ours
($+0.009$, $[+0.003,+0.015]$) and FP32, which the output-MSE model~\eqref{eq:gram} cannot
explain. At two bits, $h_i^{(C)}$ is marginally better than ours (Table~\ref{tab:ablate}).

\FloatBarrier
\section{Conclusion}
\label{sec:conclusion}
Decay weighting with normalized ranges improves low-bit recurrent-state quantization, especially
with frequent write-backs. Correlated rounding and implementation costs require further study.

\FloatBarrier
\section{Acknowledgments}
OpenAI Codex and Anthropic Claude Code assisted with language editing throughout the manuscript,
research-code development, and idea discussions. The human authors originated the core claims,
understand and have reviewed all content, and take full responsibility for it.

\FloatBarrier
\section{Compliance with Ethical Standards}
This study comprises numerical simulations and evaluations of released models on public text
benchmarks. It involves no human participants or animal experiments; ethical approval was not required.

\FloatBarrier
\section{Funding and Competing Interests}
This independent research received no external or institutional funding. The authors have no
relevant financial or nonfinancial interests to disclose.

\FloatBarrier
{\small

}
\end{document}